\newcommand{\R}{\mathbb{R}}
\newcommand{\C}{\mathbb{C}}
\newcommand{\B}{\mathbb{B}}
\newcommand{\bA}{\bm{A}}
\newcommand{\bB}{\bm{B}}
\newcommand{\bD}{\bm{D}}
\newcommand{\bI}{\bm{I}}
\newcommand{\bU}{\bm{U}}
\newcommand{\bV}{\bm{V}}
\newcommand{\bX}{\bm{X}}
\newcommand{\bu}{\bm{u}}
\newcommand{\bw}{\bm{w}}
\newcommand{\bx}{\bm{x}}
\newcommand{\bZ}{\bm{Z}}
\newcommand{\im}{\mathfrak{i}}
\DeclareMathOperator*{\argmin}{arg\,min}
\DeclareMathOperator{\prox}{\textsf{prox}}
\newtheorem{proposition}{Proposition}
\title{Automatic Registration and Clustering of Time Series}
\name{Michael Weylandt\sthanks{MW's research is supported by an appointment to the Intelligence Community Postdoctoral Research
Fellowship Program at the University of Florida Informatics Institute, administered by Oak Ridge Institute for Science and Education through an interagency agreement between the U.S. Department of Energy and the Office of the Director of National Intelligence.} and George Michailidis\sthanks{GM's work was supported in part by NSF grants DMS 1830175 and DMS 1821220. }}
\address{University of Florida Informatics Institute\\ University of Florida \\ Gainesville, FL 32611.\\
         \href{mailto:michael.weylandt@ufl.edu}{michael.weylandt@ufl.edu} and \href{mailto:gmichail@ufl.edu}{gmichail@ufl.edu}}
\begin{document}
\begin{refsection}
\maketitle

\begin{abstract}
Clustering of time series data exhibits a number of challenges not present in other settings, notably the problem of registration (alignment) of observed signals. Typical approaches include pre-registration to a user-specified template or time warping approaches which attempt to optimally align series with a minimum of distortion. For many signals obtained from recording or sensing devices, these methods may be unsuitable as a template signal is not available for pre-registration, while the  distortion of warping approaches may obscure meaningful temporal information. We propose a new method for automatic time series alignment within a \emph{clustering problem}. Our approach, Temporal Registration using Optimal Unitary Transformations (TROUT), is based on a novel dissimilarity measure between time series that is easy to compute and automatically identifies optimal alignment between pairs of time series. By embedding our new measure in a optimization formulation, we retain well-known advantages of computational and statistical performance. We provide an efficient algorithm for TROUT-based clustering and demonstrate its superior performance over a range of competitors.

\noindent \keywords{time series clustering, convex clustering, time series registration, majorization-minorization algorithms}
\end{abstract}

\section{Introduction}
\subsection{Convex Clustering}
The previous two decades have witnessed a flurry of activity in statistical machine learning and statistical signal processing, as classical methods have been reinterpreted as convex penalized estimation problems and powerful new theory has extended this methodology to the high-dimensional (under-sampled) setting. While the most notable advances have been in supervised methods such as regression and signal reconstruction, the convexity revolution has also yielded powerful new techniques for unsupervised learning. Notably, a convex formulation of clustering \citep{Lindsten:2011,Hocking:2011} has received significant attention as it allows for theoretical and computational guarantees not easily obtained for classical clustering approaches \citep{Zhu:2014,Tan:2015,Radchenko:2017}. Traditional convex clustering combines a squared Frobenius norm data fidelity term, which keeps the estimated cluster centroids near the original data, with a convex fusion penalty, which shrinks the estimated centroids together, yielding the following estimator: 
\[\hat{\bU} = \argmin_{\bU \in \R^{n \times p}} \frac{1}{2}\|\bX - \bU\|_F^2 + \lambda \sum_{\substack{i, j = 1 \\ i < j}}^n w_{ij} \|\bU_{i\cdot} - \bU_{j\cdot}\|_q\]
where $\bX \in \R^{n \times p}$ is the data matrix, $\hat{\bU} \in \R^{n \times p}$ is the matrix of estimated centroids, $\lambda > 0$ is a regularization parameter controlling the overall degree of fusion, and $\{w_{ij}\}$ are optional problem-specific non-negative fusion weights. Observations $i$ and $j$ are said to belong to the same cluster if $\hat{\bU}_{i\cdot} = \hat{\bU}_{j\cdot}$. In this paper, we only consider the convex $\ell_q$ fusion penalty ($q = 1, 2, \infty$), though some research suggests non-convex penalties also perform well \citep{Marchetti:2014}.

Focusing our attention on the data fidelity term, we note that any distance function can be used in lieu of the Frobenius norm without sacrificing the advantages of the convex formulation. \citet{Liu:2019} replace the Frobenius loss with a Huber loss to make the clustering solution robust to outliers, while \citet{Wang:2019-GECCO} propose a framework for incorporating arbitrary Bregman divergences and exponential family-type distances in the convex clustering framework. \citet{Sui:2018} use a Mahalanobis-type distance and embed convex clustering in a metric learning scheme. Extending convex clustering to non-vector-valued data, \citet{Park:2018} use an earth-mover's distance to cluster histogram-valued observations arising in genomic sequencing. Building on this line of research, we propose a novel automatic spectral-registration dissimilarity suitable for use with temporal data and investigate its use in the clustering context.

\begin{figure*}
\includegraphics[width=\textwidth]{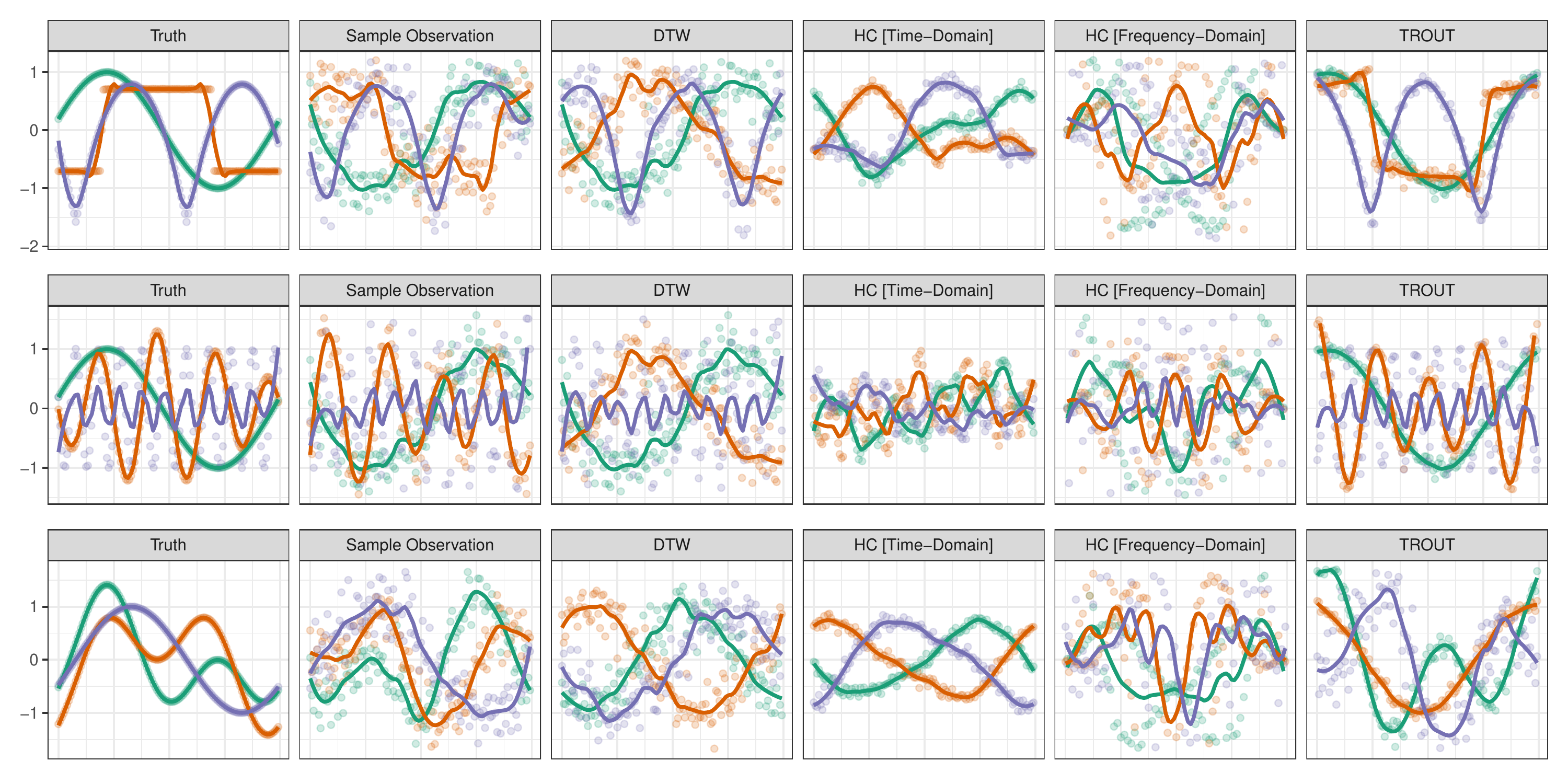}
\caption{Simulation Scenarios Used in Section \ref{sec:sims}. The first scenario (top row) consists of three signals which are clearly distinct in the time domain: a sinusoid, a smoothed triangle wave, and a square wave. The second scenario (middle row) consists sinusoids at three different frequencies. Third scenario (bottom row) consists of combinations of pairs of sinusoids with different relative phase.From left to right, the panels depict: the true cluster centroid (true signal); a sample observation, here with $\text{SNR} = 3$ and von Mises($\kappa = 10$) phase noise; centroids from DTW + hierarchical clustering (HC), na\"ive time-domain HC, frequency-domain HC, and TROUT.}
\vspace{-0.15in}
\label{fig:sim_design}
\end{figure*}

\subsection{Time Series Clustering}
Before introducing our proposal, we review some of the particular challenges associated with clustering time series data. Specifically, we focus on what \citet{Aghabozorgi:2015} term ``whole time-series clustering,'' wherein multiple time series are grouped into a small number of distinct clusters; they distinguish this from ``subsequence clustering,'' identifying subsequences within a single long time series, and ``time point clustering,'' the task of clustering individual points in a way which respects their temporal ordering. Most commonly, whole series clustering is performed by identifying a suitable distance measure between series and using that distance measure within a classical clustering method such as $k$-means or hierarchical clustering.

Perhaps the most frequently used time series distance measure is Dynamic Time Warping (DTW), originally proposed by \citet{Sakoe:1971, Sakoe:1978}. DTW uses a dynamic programming approach to optimally insert or remove additional time points to minimize the Euclidean distance between the time-domain representation of two signals. DTW is particularly well-suited for natural signals, such as those arising from social, economic, or consumer-behavior processes, where the ``shape'' of the signal is more meaningful than the exact temporal structure. The flexibility of DTW is, for some applications, a double-edged sword so a maximum distortion constraint of 10\%, following \citeauthor{Sakoe:1978}, is imposed. \citet{Ratanamahatana:2005} argue that this flexibility is often quite detrimental for clustering performance and suggest that a much lower deformation constraint be used in practice. 

Alternatives to DTW include na\"ive (time-domain) metrics, where the temporal structure is ignored, as well as clustering using the Fourier or wavelet coefficients of the time series as input data. Spectral representations have the advantage of limiting the temporal dependence and integrating well with established signal processing techniques. See the review paper by \citet{Serra:2014} for a thorough evaluation of time series metrics in the classification context. 

\section{Automatic Registration of Time Series}

While the flexibility of DTW and related methods may be suitable for some problems, we instead consider an alignment method tailored for \textit{stationary} signals. Specifically, we consider signals where the temporal information \emph{within} a signal is meaningful, but the different signals are not aligned to each other. Considering these signals in the spectral (Fourier) domain, only the relative phase of the different components is meaningful, while the absolute phase is not. We model this situation using ``\emph{phase noise},'' where a random shift is applied uniformly across the signal, corresponding to our (random) observational window. To address phase noise, we propose a new distance measure between which only depends on the relative phase of various components, and not the absolute phase. As a generalization of Euclidean distance, our method also handles observational (white) noise in the usual manner.

Given two univariate time series, we can consider three types of distances based on their spectral representations: i) \emph{phase-sensitive}, corresponding to the standard Euclidean distance between their Fourier coefficients; ii) \emph{phase-oblivious}, corresponding to the Euclidean distance between the magnitudes of their Fourier coefficients (retaining power, but discarding phase); and iii) \emph{phase-adaptive}, corresponding to the Euclidean distance between their \emph{optimally-aligned} Fourier coefficients. Formally, the phase-adaptive spectral distance can be defined as
\begin{equation}
  d_{\text{TROUT}}(\bu, \bx) = \min_{\theta \in [0, 2\pi)} \|\bu - e^{\im \theta} \bx\|_2. \label{eqn:trout}
\end{equation}
Compared to DTW, $d_{\text{TROUT}}$ preserves the temporal structure in the signal, while identifying the optimal alignment between pairs of signals. 

Given two matrices $\bX, \bU \in \C^{n \times p}$, each row of which represents the DFT of a separate series, the TROUT distance is defined as the sum of row-wise distances:
\[d_{\text{TROUT}}(\bU, \bX)^2 = \sum_{i=1}^n d_{\text{TROUT}}(\bU_{i\cdot}, \bX_{i\cdot})^2.\]
Combining this matrix distance with a convex fusion penalty yields our proposed method for simultaneous time series registration and clustering:
\begin{equation}
  \argmin_{\bU \in \C^{n \times p}} \frac{1}{2}d_{\text{TROUT}}(\bU, \bX)^2 + \lambda \sum_{\substack{i, j = 1 \\ i < j}}^n w_{ij} \|\bU_{i\cdot} - \bU_{j\cdot}\|_q. \label{eqn:trout_clust}
\end{equation}
An important aspect of our formulation is that $d_{\text{TROUT}}$ implicitly re-aligns the series at each step. Compared with standard pre-alignment techniques, this can significantly improve performance and interpretability. Specifically, $d_{\text{TROUT}}$ works by aligning each estimated centroid (row of $\bU$) with the observed data. This is preferable to pre-alignment as it avoids the difficult question of selecting a registration target, and aligns a centroid only to that data in its cluster. Similar advantages of dynamic alignment have been observed in the shape analysis context by \citet{Srivastava:2011}, though our approach is simpler to implement than theirs. We note that $d_{\text{TROUT}}$ fails to satisfy the triangle inequality and hence is not a proper distance measure. As such, Problem \eqref{eqn:trout_clust} is not generally convex without further assumptions on $\bX$.

As a generalization of Euclidean convex clustering, TROUT clustering inherits many of the attractive properties of convex clustering including: a continuous solution path, indexed by $\lambda$, which smoothly interpolates from $n$ to $1$ clusters \citep{Hocking:2011}; the ability to quickly identify local optima \citep{Chi:2015}; the ability to construct dendrogram representations of the solution path \citep{Weylandt:2020}; provable statistical consistency \citep{Zhu:2014,Tan:2015,Radchenko:2017}; and a high-degree of robustness to noise \citep{Wang:2019-GECCO,Liu:2019}. While our focus is on the use of $d_{\text{TROUT}}$ in the fusion clustering context, we also note that the TROUT distance, and associated notion of TROUT Fr\'echet mean, can also be used within $K$-means or hierarchical clustering.

\begin{figure*}
\includegraphics[width=\textwidth]{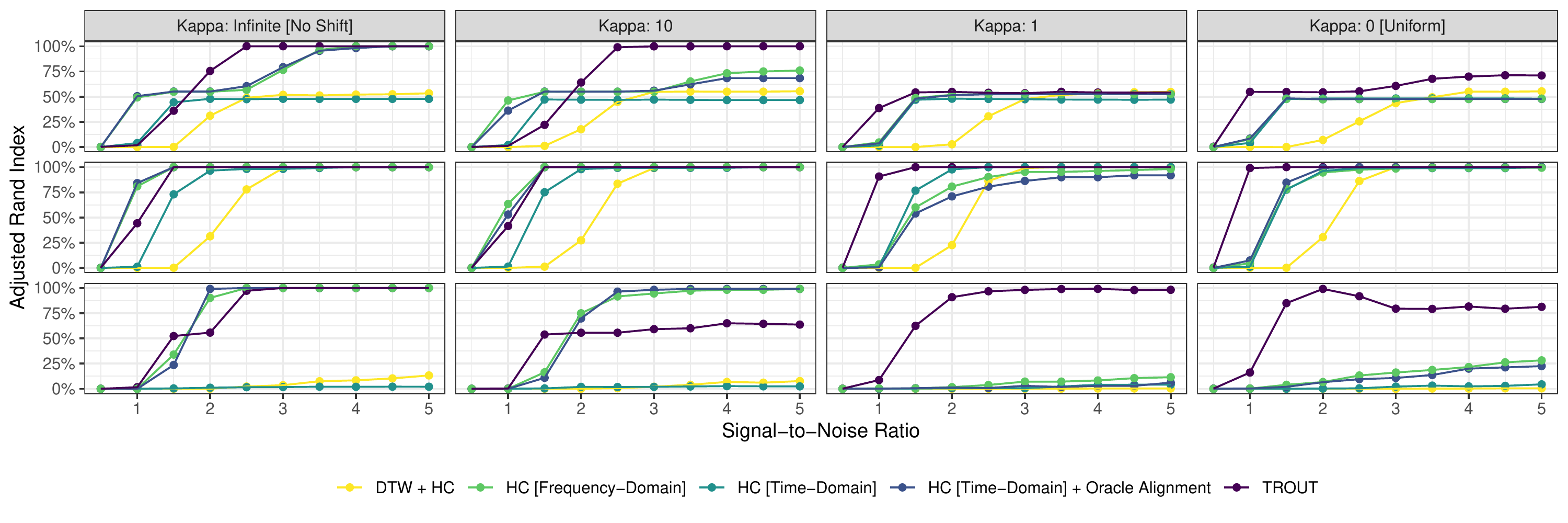}
\caption{Comparitive performance of TROUT clustering against dynamic time warping (DTW), time-domain, and frequency-domain based hierarchical clustering (HC). The same scenario designs as shown in Figure \ref{fig:sim_design} are used, with the degree of phase noise--the concentration parameter $\kappa$ in a von Mises distribution--and observational noise (SNR) varied. As can be clearly seen, TROUT outperforms all methods in the high-phase noise (low $\kappa$) setting: this outperformance is most pronounced in Scenario 3 (bottom row) where the signals differ only in the relative phase of components. Time-domain clustering with oracle alignment performs as well as frequency-domain clustering, as expected.}
\vspace{-0.12in}
\label{fig:sim_perf}
\end{figure*}

\section{Computational Considerations}
\label{sec:computation}
The TROUT distance \eqref{eqn:trout} is defined by a one-dimensional optimization problem. This problem is non-convex in $\theta$, due to the sinusoidal behavior of the complex exponential, but simple enough that bisection or grid-search methods can be used to identify the minimum. Still, the non-convex problem may pose difficulties for composite optimization schemes \citep{Duchi:2018}. Instead, we recast $d_{\text{TROUT}}$ as a complex-valued optimization problem
\begin{equation}
  d_{\text{TROUT}}(\bu, \bx) = \min_{z \in \B_{\C}(r=1)} \|\bu - z\, \bx\|_2, \label{eqn:trout_cmplx}
\end{equation}
where $\B_{\C}(r=1)$ is the complex unit sphere. This problem is essentially \textit{complex least squares} with a norm constraint and has closed-form minimizer $z = \angle[\bx^H\bu] = \bx^H\bu / \|\bx\|\|\bu\|$. In multivariate settings, $z$ is optimized over unitary transforms, justifying the TROUT acronym.

Now that we can calculate $d_{\text{TROUT}}$ efficiently, we can develop algorithms for the TROUT clustering problem \eqref{eqn:trout_clust}. Following \citet{Chi:2015} and \citet{Weylandt:2020}, we use an alternating direction method of multipliers approach, whose updates are given by
\begin{align*}
  \bU^{(k+1)} &= \argmin_{\bU \in \C^{n \times p}} \frac{1}{2} d_{\text{TROUT}}(\bU, \bX)^2 + \\ &\phantom{\argmin}\quad \frac{\rho}{2}\left\|\bD\bU - \bV^{(k)} + \bZ^{(k)}\right\|_F^2 \\
  \bV^{(k+1)} &= \prox_{\lambda / \rho P(\cdot; \bw, q)}(\bD\bU^{(k+1)} + \bZ^{(k)}) \\
  \bZ^{(k+1)} &= \bZ^{(k)} + \bD\bU^{(k+1)} - \bV^{(k+1)}
\end{align*}
where $\bD$ is a suitable difference matrix and $P(\cdot; \bw, q)$ is the weighted convex fusion penalty function. For $q = 1, 2$, the proximal operator yields a simple closed-form update for $\bV$, while for $q = \infty$, the efficient algorithm of \citet{Condat:2016} can be used. 

Unlike standard convex clustering, the primal ($\bU$) update does not have a closed-form update and instead requires an iterative solver. Note that, for two vectors $\bu$ and $\bx$, $d_{\text{TROUT}}(\bu, \bx)$ can be equivalently expressed using the projection of $\bu$ onto the set of all ``phase-adjusted'' $\bx$ vectors: that is, 
\[d_{\text{TROUT}}(\bu, \bx) = \|\bu - P_{\tilde{\bx}}(\bu)\|_2 \text{ where } \tilde{\bx} = \{e^{\im \theta}\bx: \theta \in [0, 2\pi)\}\]
and where $P_{\tilde{\bx}}(\cdot)$ denotes the projection onto the set $\tilde{\bx}$. This sort of ``distance-to-set'' function was popularized in machine learning by \citet{Xu:2017-NIPS} and is well-behaved (single-valued and differentiable) for almost all $\bu, \bx$ \citep[Sections 18.6 and 24.3]{Bauschke:2017}. With this formulation, we can use a majorization-minimization (MM) approach \citep{Lange:2016} for the $\bU$ subproblem. In particular, note that
\[\frac{1}{2}d_{\text{TROUT}}(\bU, \bX)^2 \leq \frac{1}{2}\|\bU - P_{\mathcal{U}^{\{t\}}}(\bX)\|_F^2\]
where $\mathcal{U}^{\{t\}}$ is the ``phase-adjusted'' equivalence class of the previous iterate $\bU^{\{t\}}$ and $P_{\mathcal{U}^{\{t\}}}(\cdot)$ denotes projections thereon. This majorizes the TROUT distance because $P_{\mathcal{U}^{\{t\}}}(\bX) \in \mathcal{U}^{\{t\}}$ by construction and the distance to a set is always less than or equal to the distance to a fixed point within that set. This yields the inner iterates: 
\begin{align*}
  \tilde{\bU}^{\{t+1\}} &= \argmin_{\bU \in \C^{n \times p}} \frac{1}{2}\left\|\bU - P_{\mathcal{U}^{\{t\}}}(\bX)\right\|_F^2 + \\&\phantom{\argmin}\quad \frac{\rho}{2}\left\|\bD\bU - \bV^{(k)} + \bZ^{(k)}\right\|_F^2
\end{align*}
which have a closed-form solution 
\[\tilde{\bU}^{\{t+1\}} = (\bI + \rho\bD^T\bD)^{-1}(P_{\mathcal{U}^{\{t\}}}(\bX) + \rho \bD^T(\bV^{(k)} - \bZ^{(k)})).\]
Though non-convex, the $d_{\text{TROUT}}(\cdot, \bX)$ function is well-behaved and global convergence to a stationary point follows from Scenario 1(a) in Theorem 1 of \citet{Wang:2019}.

Additional performance improvements can be obtained by taking only a single MM step within each ADMM iteration \citep{Chen:2019-InexactADMM}, though this is typically unnecessary as the inner iterates tend to converge quite quickly. If the $\mathcal{O}(n^3)$ Cholesky factorization of $\bI + \rho \bD^T\bD$ is cached, the per iterate cost of this approach is $\mathcal{O}(n^2p)$, dominated by two rounds of triangular back-substitution, which scales efficiently with high sampling rates. For the simulations considered in the next section ($\bX \in \C^{60 \times 64}$), this algorithm solves Problem \eqref{eqn:trout_clust} to numerical precision in an average of 0.0825 seconds per value of $\lambda$ in a warm start scheme on a low-end desktop with 16 GB of RAM and an Intel i7-9700 processor running at 3.0 GHz. When combined with the ``one-step'' algorithmic regularization approach of \citet{Weylandt:2020}, performance is increased by an order of magnitude (0.000665 seconds per $\lambda$) without any noticeable drop in clustering accuracy.

\section{Simulation Study}
\label{sec:sims}

In this section, we compare TROUT clustering with various time- and frequency-domain clustering methods. We make repeated use of three simulation designs, shown in the left column of Figure \ref{fig:sim_design}. These three families  highlight  different challenges in time series clustering. The first scenario (top row) includes both a square wave and a (smoothed) triangle wave, which have slowly decaying spectra and exhibit Gibbs phenomena (rapid variation overshooting the target function) at their discontinuities; because of the slow spectrum decay, these signals have more relevant features than observations in each cluster, a situation which is well-known to challenge many clustering methods, while the discontinuities in the time-domain challenge warping or smoothing approaches. The second scenario (middle row) consists of sinusoids of different frequencies: these signals are easily separated in the frequency domain, but time-domain approaches struggle to identify structure. The third scenario (bottom row) is the most challenging: all three signals consist of pairs of sinusoids, differing only in relative phase. Power-only approaches are unable to distinguish these signals, and phase-noise poses a particular challenge for both time- and frequency-domain methods. 

From left-to-right, Figure \ref{fig:sim_design} shows the true cluster centroids; typical observations generated with a signal-to-noise ratio of 3 and with phase noise drawn from a mean zero von Mises random variable with concentration $\kappa = 10$; centroids estimated by dynamic time warping followed by hierarchical clustering (HC); centroids estimated by HC in the time-domain; centroids estimated by HC in the frequency-domain; and centroids estimated by TROUT. In each setting, signals are observed at 128 Hz, with 20 samples generated from each cluster for a total of 60 observations, yielding a spectral representation $\bX_C \in \C^{60 \times 64}$ or a time-domain representation $\bX_R \in \R^{60 \times 128}$. For legibility, a $5$-nearest neighbor smoother is super-imposed. While both TROUT and frequency-domain HC accurately separate clusters, only TROUT is able to correctly recover the shape of the true centroids (up to phase shift). 

Figure \ref{fig:sim_perf} uses the same scenarios, but varies the signal to noise ratio and the degree of phase noise. The sampling rate and number of observations is fixed as before. The adjusted Rand index used to measure clustering performance, is averaged over 50 replicates, with the oracle number of clusters made available to all methods. As can be seen, TROUT and frequency-domain HC are consistently among the best performing methods in all regimes, with the advantages of TROUT increasing as the phase noise increases ($\kappa$ decreases). For $\kappa = \infty$, corresponding to the case of no phase noise, the additional flexibility of TROUT registration slightly decreases performance. Focusing on the third scenario (bottom row) wherein signals vary only by their phase, we note that frequency domain HC is unable to distinguish signals in the high-phase noise setting, even with a high SNR. Time-domain methods including na\"ive HC and DTW are unable to correctly discover true clusters in this scenario at any SNR.

\section{Discussion and Extensions}
We have proposed a new approach to time series clustering which combines a convex fusion penalty with automatic registration. We have derived a closed-form solution for the optimal spectral-domain alignment and have used it to develop an efficient MM algorithm which attains a local optimum provably and efficiently. Simulation studies confirm the efficiency and stability of our approach across several regimes. Our method is particularly well-suited for situations of high ``phase noise,'' as would be expected when clustering naturally occurring stationary signals, though it also performs competitively in more standard domains. In the simulations shown in Section \ref{sec:sims}, we only considered basic spectral analysis (whole-sample DFT without tapering or smoothing), though our method is readily applied within more robust analytical pipelines.

The heart of our proposed method is the automatic and optimal alignment induced by the TROUT distance. While we have focused on clustering stationary univariate signals, the TROUT distance can be used anywhere a standard $\ell_2$ or Frobenius distance is used in machine learning. The basic framework of TROUT-based clustering can also be extended in several useful directions, including automatic amplitude adjustment, frequency isolation / band identification, and multivariate time series clustering. We conclude with a brief discussion of each of these extensions. 

As defined above, the TROUT distance \eqref{eqn:trout} only allows for phase adjustment. For signals which may be observed on different scales, where only intra-signal relative magnitude is meaningful, the complex form of the TROUT distance \eqref{eqn:trout_cmplx} can be relaxed to allow arbitrary complex $z$. As with standard TROUT, the relaxed problem has closed-form minimizer $z = \bx^H\bu / \|\bx\|^2$ and the associated clustering problem can be analyzed with a MM algorithm. Notably, this relaxed form has better convexity properties than the method discussed above. In certain problems, it may be useful to isolate certain frequencies which are most important to the clustering solution; this can be achieved by adding an $\ell_1$ or mixed $\ell_2/\ell_1$ penalty to the columns of $\bU$, as explored by \citet{Wang:2019-GECCO}. If, instead, it is useful to identify bands of frequencies which behave similarly, a TROUT-based form of convex bi-clustering \citep{Chi:2017} could be used to simultaneously cluster both observations and frequencies into clusters and meaningful bands respectively. Finally, if we instead seek to cluster $p$-variate time series, our data $\bX$ would now be an $n \times p \times p$ complex higher-order array (tensor) which we would seek to cluster along the first mode: in this case, the TROUT problem now consists of alignment by an arbitrary unitary transform. This problem can be solved via a unitary Procrustes problem, which has a closed-form solution in terms of the SVD of $\bX^H\bU$ and can be easily embedded in efficient tensor clustering algorithms \citep{Weylandt:2019b}. We plan to explore several of these extensions in future work. 

\section{References}
{\ninept \printbibliography[heading=none]}
\end{refsection}

\begin{refsection}
\onecolumn
\appendix
\setcounter{algorithm}{0}
\renewcommand{\thealgorithm}{A\arabic{algorithm}}

\section*{Supplementary Materials}
\section{Proofs} \label{app:proofs}
As discussed in Section \ref{sec:computation}, the TROUT distance between two vectors can be computed by solving the complex-valued (Wirtinger) optimization problem
\[d_{\text{TROUT}}(\bu, \bx) = \min_{z \in \B_{\C}(r = 1)} \|\bu - z \bx\|_2\]
If we consider the column vectors $\bu$ and $\bx$ as $p \times 1$ matrices and $z$ as a $1 \times 1$ matrix, we note that this is a special case of the \emph{Unitary} Procrustes problem: 
\[\argmin_{\bZ \in \mathfrak{U}_k} \|\bU - \bX\bZ \|_F^2\]
where $\mathfrak{U}_k$ denotes the unitary group of dimension $k$; that is, the set of $k \times k$ matrices satisfying $\bZ^H\bZ = \bZ\bZ^H = \bI_{k \times k}$. The unitary Procrustes problem has a closed-form solution given by 
\[\bA\bB^H \text{ where } \bA\bD\bB^H = \textsf{SVD}(\bX^H\bU)\]
(The Procrustes problem is typically presented with $\bZ\bX$ instead of $\bX\bZ$: in that case, the solution uses the SVD of $\bU\bX^H$.)
The unitary Procrustes problem does not often appear in the literature \emph{per se}, but analysis of the orthogonal (real-valued) Procrustes problem translates directly to the complex-domain \citep{Schonemann:1966,Elden:1999}. With this in hand, we are able to give a closed-form solution for the TROUT minimizer. 

\begin{proposition} Let $\bu, \bx \in \C^p$ be arbitrary complex vectors. Then 
\[\angle[\bx^H\bu] = \frac{\bx^H\bu}{\|\bx\|\|\bu\|} = \argmin_{z \in \B_{\C}(r = 1)} \|\bu - z \bx\|_2\]
and
\[d_{\text{TROUT}}(\bu, \bx) = \min_{\theta \in [0, 2\pi)} \|\bu e^{\im \theta}\bx\| = \left\|\bu - \angle[\bx^H\bu] \bx\right\| \]
\end{proposition}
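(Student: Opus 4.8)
The plan is to establish both displays by a short direct computation, and then to record that the same answer drops out of the unitary Procrustes solution quoted above. Throughout write $\angle[w] := w/|w|$ for the unit-modulus complex number carrying the argument of $w$; with this reading the denominator $\|\bx\|\,\|\bu\|$ in the statement is to be understood as $|\bx^H\bu|$, which keeps $\angle[\bx^H\bu]$ on the unit circle $\B_\C(r=1)$. First I would expand, for arbitrary $z \in \C$,
\[\|\bu - z\bx\|_2^2 = \|\bu\|_2^2 + |z|^2\|\bx\|_2^2 - 2\,\mathrm{Re}\!\left(\bar z\,\bx^H\bu\right),\]
and observe that on the constraint set $\{|z| = 1\}$ the $|z|^2$ term disappears, so $\|\bu\|_2^2 + \|\bx\|_2^2$ is constant and minimizing $\|\bu - z\bx\|_2$ over the unit circle is equivalent to \emph{maximizing} $\mathrm{Re}(\bar z\,\bx^H\bu)$.

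Next I would solve this scalar problem. By the triangle inequality for complex numbers, $\mathrm{Re}(\bar z\,\bx^H\bu) \le |\bar z\,\bx^H\bu| = |\bx^H\bu|$ whenever $|z| = 1$, with equality exactly when $\bar z\,\bx^H\bu$ is a nonnegative real, i.e.\ when $z$ and $\bx^H\bu$ share the same argument. When $\bx^H\bu \neq 0$ this pins down $z = \angle[\bx^H\bu]$ uniquely; when $\bx^H\bu = 0$ every unit-modulus $z$ is optimal and the stated formula is merely a (harmless) convention. Substituting the optimal $z$ back into the expansion yields
\[d_{\text{TROUT}}(\bu,\bx) = \sqrt{\|\bu\|_2^2 + \|\bx\|_2^2 - 2\,|\bx^H\bu|} = \left\|\bu - \angle[\bx^H\bu]\,\bx\right\|_2,\]
the last equality being the same expansion run in reverse at $z = \angle[\bx^H\bu]$. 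Finally, since $\theta \mapsto e^{\im\theta}$ maps $[0,2\pi)$ onto $\B_\C(r=1)$, the minimization in the definition \eqref{eqn:trout} coincides with the one just solved, and any $\theta^\star$ with $e^{\im\theta^\star} = \angle[\bx^H\bu]$ attains it; this closes both displays.

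As a cross-check aligned with the appendix framing, one can instead treat $\bu,\bx$ as $p \times 1$ matrices and $z$ as a $1 \times 1$ unitary matrix (the $1 \times 1$ unitary group being precisely $\B_\C(r=1)$) and specialize the unitary Procrustes minimizer $\bA\bB^H$, $\bA\bD\bB^H = \SVD(\bX^H\bU)$: the SVD of the scalar $\bx^H\bu$ is $\big(\bx^H\bu/|\bx^H\bu|\big)\cdot|\bx^H\bu|\cdot 1$, so $\bA\bB^H = \bx^H\bu/|\bx^H\bu| = \angle[\bx^H\bu]$, in agreement. None of these steps is technically hard; the only point requiring care is the degenerate case $\bx^H\bu = 0$ (equivalently $\bu$ and $\bx$ ``phase-orthogonal''), where the minimizer is non-unique and the quoted closed form is a convention rather than a consequence of optimality — precisely the measure-zero set on which $P_{\tilde{\bx}}(\bu)$ fails to be single-valued, as flagged in Section \ref{sec:computation}.
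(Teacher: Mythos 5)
Your proof is correct, and it takes a genuinely different route from the paper. The paper argues top-down: it quotes the closed-form solution of the unitary Procrustes problem and then writes down an ``analytic SVD'' to specialize it, whereas you argue bottom-up, expanding $\|\bu - z\bx\|_2^2$, noting that on $|z|=1$ only the cross term $-2\,\mathrm{Re}(\bar z\,\bx^H\bu)$ varies, and maximizing that by the elementary bound $\mathrm{Re}(\bar z\,\bx^H\bu) \le |\bx^H\bu|$ with equality iff $z$ carries the argument of $\bx^H\bu$. Your approach buys several things the paper's proof does not: it is self-contained (no appeal to the Procrustes literature), it treats the degenerate case $\bx^H\bu = 0$ explicitly, and it makes clear that the minimizer must be normalized by $|\bx^H\bu|$ --- which matters, because the paper's displayed formula $\bx^H\bu/(\|\bx\|\|\bu\|)$ is unit-modulus only when Cauchy--Schwarz is tight, and the paper's ``analytic SVD'' as written is really a factorization of the rank-one $p\times p$ matrix $\bx\bu^H$ rather than of the scalar $\bx^H\bu$, so its final identification of a $p\times p$ outer product with the scalar $\angle[\bx^H\bu]$ does not typecheck. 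What the paper's route buys instead is the structural connection to the general unitary Procrustes machinery that the discussion section later leans on for the multivariate/tensor extension; your cross-check recovers exactly that specialization, correctly, via the polar form of the $1\times 1$ matrix $\bx^H\bu$. The only cosmetic point: both you and the paper silently repair the typo $\|\bu\, e^{\im\theta}\bx\|$ (missing minus sign) in the second display, which is fine.
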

\begin{proof}
Using the closed-form solution for the Procrustes problem given above, it suffices to calculate the SVD of $\bx^h\bu$. Note that we have the analytic SVD
\[\bx^H\bu = \begin{pmatrix}\frac{\bx}{\|\bx\|}\end{pmatrix}\begin{pmatrix}\|\bu\|\bx\|\end{pmatrix}\begin{pmatrix} \frac{\bu}{\|\bu\|}\end{pmatrix}^H\]
so the Procrustes solution is given by 
\[\begin{pmatrix}\frac{\bx}{\|\bx\|}\end{pmatrix}\begin{pmatrix} \frac{\bu}{\|\bu\|}\end{pmatrix}^H = \angle[\bx^H\bu]\]
as claimed. The second claim follows by substitution in the definition of $d_{\text{TROUT}}$.
\end{proof}
\section{Additional References}
\printbibliography[heading=none]

\end{refsection}
\end{document}